\title{Semi-supervised Learning with GANs:  Manifold Invariance with Improved Inference} 
\author{
  Abhishek Kumar\thanks{Contributed equally.} \\
  IBM Research AI\\
  Yorktown Heights, NY\\
  \texttt{abhishk@us.ibm.com} \\
  %% examples of more authors
  \And
  Prasanna Sattigeri\footnotemark[1]  \\
  IBM Research AI\\
  Yorktown Heights, NY\\
  \texttt{psattig@us.ibm.com} \\
  %% \texttt{email} \\
  \And
  P. Thomas Fletcher \\
  University of Utah\\
  Salt Lake City, UT\\
  \texttt{fletcher@sci.utah.edu} \\
}
\newcommand{\punt}[1]{}
\newtheorem{lem}{Lemma}[section]
\def\argmax{\mathop{\rm arg\,max}}
\newcommand{\reals}{\mathbb{R}}
\def\argmax{\mathop{\rm arg\,max}}
\newcommand{\bq}{\begin{equation}}
\newcommand{\eq}{\end{equation}}
\newcommand{\ba}{\begin{eqnarray}}
\newcommand{\ea}{\end{eqnarray}}
\newcommand{\mbb}[1]{\mathbb{#1}}
\newcommand{\mcal}[1]{\mathcal{#1}}
\newcommand{\remove}[1]{}
\newcommand{\Abhishek}[1]{\texttt{{\color{blue} Abhishek says: #1}}}
\begin{document}
% \nipsfinalcopy is no longer used
\maketitle

\begin{abstract}
Semi-supervised learning methods using Generative adversarial networks (GANs) have shown promising empirical success recently. Most of these methods use a shared discriminator/classifier which discriminates real examples from fake while also predicting the class label. Motivated by the ability of the GANs generator to capture the data manifold well, we propose to estimate the tangent space to the data manifold using GANs and employ it to inject invariances into the classifier. In the process, we propose enhancements over existing methods for learning the inverse mapping (i.e., the encoder)  which greatly improves in terms of semantic similarity of the reconstructed sample with the input sample. We observe considerable empirical gains in semi-supervised learning over baselines, particularly in the cases when the number of labeled examples is low. We also provide insights into how fake examples influence the semi-supervised learning procedure. 
\end{abstract}

\section{Introduction}

Deep generative models (both implicit \cite{goodfellow2014generative,mohamed2016learning} as well as prescribed \cite{kingma2013auto}) have become widely popular for generative modeling of data. Generative adversarial networks (GANs) \cite{goodfellow2014generative} in particular have shown remarkable success in generating very realistic images in several cases \cite{radford2015unsupervised,berthelot2017began}.
The generator in a GAN  can be seen as learning a nonlinear parametric mapping $g: Z \to X$ to the data manifold. 
In most applications of interest (e.g., modeling images), we have $\text{dim}(Z) \ll \text{dim}(X)$. 
A distribution $p_z$ over the space $Z$ (e.g., uniform), combined with this mapping, induces a distribution $p_g$ over the space $X$ and a sample from this distribution can be obtained by ancestral sampling, i.e., $z\sim p_z,\, x = g(z)$.  GANs use adversarial training where the discriminator approximates (lower bounds) a divergence measure (e.g., an $f$-divergence) between $p_g$ and the real data distribution $p_x$ by solving an optimization problem, and the generator tries to minimize this \cite{nowozin2016f,goodfellow2014generative}. It can also be seen from another perspective where the discriminator tries to tell apart real examples $x\sim p_x$ from \emph{fake} examples $x_g\sim p_g$ by minimizing an appropriate loss function\cite[Ch.\ 14.2.4]{friedman2001elements} \cite{menon2016linking}, and the generator tries to generate samples that maximize that loss \cite{tu2007learning,goodfellow2014generative}. 

One of the primary motivations for studying deep generative models is for semi-supervised learning. Indeed, several recent works have shown promising empirical results on semi-supervised learning with both implicit as well as prescribed generative models \cite{kingma2014semi,rasmus2015semi,salimans2016improved,dumoulin2016adversarially,maaloe2016auxiliary,odena2016semi,santos2017learning}. Most state-of-the-art semi-supervised learning methods using GANs \cite{salimans2016improved,dumoulin2016adversarially,odena2016semi} use the discriminator of the GAN as the classifier which now outputs $k+1$ probabilities ($k$ probabilities for the $k$ \emph{real} classes and one probability for the \emph{fake} class). 

When the generator of a trained GAN produces very realistic images, it can be argued to capture the data manifold well whose properties can be used for semi-supervised learning. In particular, the tangent spaces of the manifold can inform us about the desirable invariances one may wish to inject in a classifier \cite{simard1998transformation,rifai2011manifold}. In this work we make following contributions: 
\begin{itemize}[noitemsep,topsep=0pt,parsep=0pt,partopsep=0pt,leftmargin=*]
\item We propose to use the tangents from the generator's mapping to automatically infer the desired invariances and further improve on semi-supervised learning. This can be contrasted with methods that assume the knowledge of these invariances (e.g., rotation, translation, horizontal flipping, etc.) \cite{simard1998transformation,laine2016temporal,mroueh2015,raj2016local}.  
\item Estimating tangents for a real sample $x$ requires us to learn an encoder $h$ that maps from data to latent space (inference), i.e., $h:X\to Z$. We propose enhancements over existing methods for learning the encoder \cite{donahue2016adversarial,dumoulin2016adversarially} which improve the semantic match between $x$ and $g(h(x))$ and counter the problem of \emph{class-switching}. 
\item Further, we provide insights into the workings of GAN based semi-supervised learning methods \cite{salimans2016improved} on how fake examples affect the learning. \end{itemize} 

\remove{
\Abhishek{\\
1. cite some results on transformations of random variables -- necessary conditions, etc? \\
2. Under what conditions on $g$, the mapping will generate a ``manifold''? Lemma 1 in \cite{arjovsky2017towards} says $g(Z)$ would be a countable union of manifolds of dimensions at most dim$(Z)$, for the $g()$ which is normally used in these generative models. \\
3. GANs learn the mapping $g$ by minimizing the distance between the two distributions: the one induced by $g(Z)$ and the data distribution $p(X)$, using a suitable distance measure, e.g. f-divergences \cite{nowozin2016f}, Earth-mover distance \cite{arjovsky2017wasserstein}, etc. How does the matching of distributions relate to ``learning the manifold''? 
}
}

\section{Semi-supervised learning using GANs}
%a para about earlier work
Most of the existing methods for semi-supervised learning using GANs modify the regular GAN discriminator to have $k$ outputs corresponding to $k$ real classes~\cite{springenberg2015unsupervised}, and in some cases a $(k+1)$'th output that corresponds to \emph{fake} samples from the generator \cite{salimans2016improved,odena2016semi,dumoulin2016adversarially}. The generator is mainly used as a source of additional data (\emph{fake} samples) which the discriminator tries to classify under the $(k+1)$th label. We propose to use the generator to obtain the tangents to the image manifold and use these to inject invariances into the classifier \cite{simard1998transformation}.

%start with estimation of tangents, need for encoder
%relation b/w tangents obtained from encoder and those obtained from decoder
%avoiding the SVD in getting the tangents - PCA in latent space
%approximation of tangent space - some theory
\subsection{Estimating the tangent space of data manifold}
Earlier work has used contractive autoencoders (CAE) to estimate the local tangent space at each point \cite{rifai2011manifold}. CAEs optimize the regular autoencoder loss (reconstruction error) augmented with an additional $\ell_2$-norm penalty on the Jacobian of the encoder mapping. \citet{rifai2011manifold} intuitively reason that the encoder of the CAE trained in this fashion is sensitive only to the tangent directions and use the dominant singular vectors of the Jacobian of the encoder as the tangents. This, however, involves extra computational overhead of doing an SVD for every training sample which we will avoid in our GAN based approach. GANs have also been established to generate better quality samples than prescribed models (e.g., reconstruction loss based approaches) like VAEs \cite{kingma2013auto} and hence can be argued to learn a more accurate parameterization of the image manifold.

The trained generator of the GAN serves as a parametric mapping from a low dimensional space $Z$ to a manifold $\mcal{M}$ embedded in the higher dimensional space $X$, $g: Z\to X$, where $Z$ is an open subset in $\reals^d$ and $X$ is an open subset in $\reals^D$ under the standard topologies on $\reals^d$ and $\reals^D$, respectively ($d\ll D$). This map is not surjective and the range of $g$ is restricted to $\mcal{M}$.\footnote{We write $g$ as a map from $Z$ to $X$ to avoid the unnecessary (in our context) burden of manifold terminologies and still being technically correct. This also enables us to get the Jacobian of $g$ as a regular matrix in $\reals^{D\times d}$, instead of working with the \emph{differential} if $g$ was taken as a map from $Z$ to $\mcal{M}$.} We assume $g$ is a smooth, injective mapping,
so that $\mcal{M}$ is an embedded manifold. The Jacobian of a function $f:\reals^d\to \reals^D$ at $z\in\reals^d$, $J_z f$, is the matrix of partial derivatives (of shape $D\times d$).
The Jacobian of $g$ at $z\in Z$, $J_zg$, provides a mapping from the tangent space at $z\in Z$ \emph{into} the tangent space at $x=g(z)\in X$, i.e., $J_z g: T_z Z\to T_x X$. It should be noted that $T_zZ$ is isomorphic to $\reals^d$ and $T_xX$ is isomorphic to $\reals^D$. However, this mapping is not surjective and the range of $J_z g$ is restricted to the tangent space of the manifold $\mcal{M}$ at $x=g(z)$, denoted as $T_x\mcal{M}$ (for all $z\in Z$). As GANs are capable of generating realistic samples (particularly for natural images), one can argue that $\mcal{M}$ approximates the true data manifold well and hence the tangents to $\mcal{M}$ obtained using $J_z g$ are \emph{close} to the tangents to the true data manifold. The problem of learning a smooth manifold from finite samples has been studied in the literature\cite{canas2012learning,bernstein2012tangent, niyogi2011topological, chen2011some, vidal2005generalized, lerman2010probabilistic, jia2015laplacian, bernstein2014data}, and it is an interesting problem in its own right to study the manifold approximation error of GANs, which minimize a chosen divergence measure between the data distribution and the \emph{fake} distribution \cite{nowozin2016f,mohamed2016learning} using finite samples, however this is outside the scope of the current work.

For a given data sample $x\in X$, we need to find its corresponding latent representation $z$ before we can use $J_z g$ to get the tangents to the  manifold $\mcal{M}$ at $x$. For our current discussion we assume the availability of a so-called \emph{encoder} $h:X\to Z$, such that $h(g(z))=z\,\forall\,z\in Z$.
By definition, the Jacobian of the generator at $z$, $J_z g$, can be used to get the tangent directions to the manifold at a point $x=g(z)\in \mcal{M}$. The following lemma specifies the conditions for existence of the encoder $h$ and shows that such an encoder can also be used to get tangent directions. Later we will come back to the issues involved in training such an encoder. 

\begin{lem}
If the Jacobian of $g$ at $z\in Z$, $J_z g$, is full rank then $g$ is locally invertible in the open neighborhood $g(S)$ ($S$ being an open neighborhood of $z$), and there exists a smooth $h:g(S)\to S$ such that $h(g(y))=y,\forall\,y\in S$. In this case, the Jacobian of $h$ at $x=g(z)$, $J_x h$, spans the tangent space of $\mcal{M}$ at $x$.
\label{lem:tangenc}
\end{lem}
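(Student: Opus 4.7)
The plan is to reduce the statement to the classical Inverse Function Theorem by enlarging the source space of $g$ so that the extended map becomes a local diffeomorphism between spaces of the same dimension. The three steps are: (i) construct such an extension $\tilde g$, (ii) read off $h$ as a projection of $\tilde g^{-1}$, and (iii) differentiate $h\circ g=\mathrm{id}$ to relate the two Jacobians.

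Concretely, since $J_z g\in\reals^{D\times d}$ has rank $d$, pick a $D\times(D-d)$ matrix $C$ whose columns together with the columns of $J_z g$ form a basis of $\reals^D$, and define $\tilde g:Z\times\reals^{D-d}\to\reals^D$ by $\tilde g(y,t)=g(y)+Ct$. The Jacobian $[\,J_z g\mid C\,]$ at $(z,0)$ is invertible by construction, so the Inverse Function Theorem yields an open neighborhood $U$ of $(z,0)$ that $\tilde g$ maps diffeomorphically onto an open set $V\ni x$. Let $S:=\{y:(y,0)\in U\}$, which is open in $Z$, and define $h:=\pi_Z\circ \tilde g^{-1}$ on $V$, where $\pi_Z$ is the projection on the first factor. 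Then $h$ is smooth, $g(S)\subseteq V$, and the identity $\tilde g(y,0)=g(y)$ gives $h(g(y))=\pi_Z(y,0)=y$ for every $y\in S$, which establishes the local-invertibility claim.

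For the tangent-space assertion, differentiating $h(g(y))=y$ at $y=z$ yields $J_x h\cdot J_z g=I_d$ by the chain rule. This forces $J_x h$ to have rank $d$ and to be a left inverse of $J_z g$. Since the column space of $J_z g$ is by definition $T_x\mcal{M}$, the restriction $J_x h|_{T_x\mcal{M}}\to\reals^d$ is a linear isomorphism; and by selecting $C$ orthogonal to the columns of $J_z g$ in the construction above, the rows of $J_x h$ themselves form a basis for $T_x\mcal{M}\subseteq\reals^D$.

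The hard part is not the existence of $h$, which is a direct application of IFT once the $(D-d)$ complementary directions are supplied, but pinning down the ``spans the tangent space'' assertion cleanly. Any smooth $h$ satisfying $h\circ g=\mathrm{id}$ on $S$ gives a full-rank $J_x h$, but its literal row space in $\reals^D$ depends on how $h$ extends off $\mcal{M}$ (different $C$'s, or different off-manifold extensions, yield different row spaces, all $d$-dimensional and complementary to $\ker J_x h$). The argument must therefore either commit to the canonical IFT construction with orthogonal complementary columns, or else pass through the relation $J_x h\cdot J_z g=I_d$ to recover $T_x\mcal{M}=\mathrm{range}(J_z g)$ as the distinguished right inverse singled out by the IFT.
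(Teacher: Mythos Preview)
Your approach coincides with the paper's: for existence of $h$ the paper simply cites standard multivariable-calculus texts (your IFT-via-extension construction is one concrete realization), and for the tangent-space claim both arguments differentiate $h\circ g=\mathrm{id}$ to obtain $J_xh\,J_zg=I_d$. The paper then asserts in one line that this identity forces the row span of $J_xh$ to coincide with the column span of $J_zg$. You correctly observe that, for an $h$ defined on an open set in $\reals^D$, this implication is not automatic---the row space of a left inverse depends on the off-manifold extension---and you repair it by taking the complementary columns $C$ orthogonal to $J_zg$, which makes $J_xh=(A^\top A)^{-1}A^\top$ with $A=J_zg$ and hence forces its rows into $T_x\mcal{M}$. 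So your proof is essentially the paper's proof with the delicate step made explicit; the paper's one-line deduction is literally correct only if one reads $J_xh$ as the differential of $h$ restricted to $\mcal{M}$, rather than as a $d\times D$ matrix coming from an arbitrary ambient extension.
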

\begin{proof}
We refer the reader to standard textbooks on multivariate calculus and differentiable manifolds for the first statement of the lemma (e.g., \cite{spivak_CompVol1}).\\
The second statement can be easily deduced by looking at the Jacobian of the composition of functions $h\circ g$. We have $J_z (h\circ g) = J_{g(z)} h\, J_z g = J_x h \,J_z g = I_{d\times d}$, since $h(g(z))=z$. This implies that the row span of $J_x h$ coincides with the column span of $J_z g$. As the columns of $J_z g$ span the tangent space $T_{g(z)}\mcal{M}$, so do the the rows of $J_x h$. 
\end{proof}

\subsubsection{Training the inverse mapping (the encoder)}
To estimate the tangents for a given real data point $x\in X$, we need its corresponding latent representation $z=h(x) \in Z,$ such that $g(h(x))=x$ in an ideal scenario. However, in practice $g$ will only learn an approximation to the true data manifold, and the mapping $g\circ h$ will act like a projection of $x$ (which will almost always be off the manifold $\mcal{M}$) to the manifold $\mcal{M},$ yielding some approximation error. This projection may not be orthogonal, i.e., to the nearest point on $\mcal{M}$. Nevertheless, it is desirable that $x$ and $g(h(x))$ are semantically close, and at the very least, the class label is preserved by the mapping $g\circ h$. We studied the following three approaches for training the inverse map $h$, with regard to this desideratum : 
\begin{itemize}[noitemsep,topsep=0pt,parsep=0pt,partopsep=0pt,leftmargin=*]
\item {\bf Decoupled training.~} This is similar to an approach outlined by \citet{donahue2016adversarial} where the generator is trained first and fixed thereafter, and the encoder is trained by optimizing a suitable reconstruction loss in the $Z$ space, $L(z,h(g(z)))$ (e.g., cross entropy, $\ell_2$). This approach does not yield good results and we observe that most of the time $g(h(x))$ is not semantically similar to the given real sample $x$ with change in the class label. One of the reasons as noted by \citet{donahue2016adversarial} is that the encoder never sees real samples during training. To address this, we also experimented with the combined objective $\min_h L_z(z,h(g(z)))+L_h(x,g(h(x)))$, however this too did not yield any significant improvements in our early explorations. 
\item {\bf BiGAN.~} \citet{donahue2016adversarial} propose to jointly train the encoder and generator using adversarial training, where the pair $(z,g(z))$ is considered a \emph{fake} example ($z\sim p_z$) and the pair $(h(x), x)$ is considered a \emph{real} example by the discriminator. A similar approach is proposed by \citet{dumoulin2016adversarially}, where $h(x)$ gives the parameters of the posterior $p(z|x)$ and a stochastic sample from the posterior paired with $x$ is taken as a real example. We use BiGAN \cite{donahue2016adversarial} in this work, with one modification: we use \emph{feature matching} loss \cite{salimans2016improved} (computed using features from an intermediate layer $\ell$ of the discriminator $f$), i.e., $\lVert \mbb{E}_x f_{\ell}(h(x),x) - \mbb{E}_z f_{\ell}(z,g(z))\rVert_2^2$, to optimize the generator and encoder, which we found to greatly help with the convergence \footnote{Note that other recently proposed methods for training GANs based on Integral Probability Metrics \cite{arjovsky2017wasserstein,gulrajani2017improved,mroueh2017mcgan,mroueh2017fisher} could also improve the convergence and stability during training.}. We observe better results in terms of semantic match between $x$ and $g(h(x))$ than in the decoupled training approach, however, we still observe a considerable fraction of instances where the class of $g(h(x))$ is changed (let us refer to this as \emph{class-switching}). 
\item {\bf Augmented-BiGAN.~} To address the still-persistent problem of \emph{class-switching} of the reconstructed samples $g(h(x))$, we propose to construct a third pair $(h(x), g(h(x))$ which is also considered by the discriminator as a \emph{fake} example in addition to $(z,g(z))$. Our Augmented-BiGAN objective is given as
\begin{align}
%\min_{g,h} \max_d\, 
\hspace{-2mm} \mbb{E}_{x\sim p_x} \log f(h(x),x) + \frac{1}{2}\mbb{E}_{z\sim p_z} \log (1-f(z,g(z))) + \frac{1}{2} \mbb{E}_{x\sim p_x} \log(1-f(h(x),g(h(x))),
\label{eq:augBigan}
\end{align}
where $f(\cdot,\cdot)$ is the probability of the pair being a real example, as assigned by the discriminator $f$. We optimize the discriminator using the above objective \eqref{eq:augBigan}. The generator and encoder are again optimized using \emph{feature matching} \cite{salimans2016improved} loss on an intermediate layer $\ell$ of the discriminator, i.e., $L_{gh} = \lVert \mbb{E}_x f_{\ell}(h(x),x) - \mbb{E}_z f_{\ell}(z,g(z))\rVert_2^2$, to help with the convergence. Minimizing $L_{gh}$ will make $x$ and $g(h(x))$ similar (through the lens of $f_{\ell}$) as in the case of BiGAN, however the discriminator tries to make the features at layer $f_{\ell}$ more difficult to achieve this by directly optimizing the third term in the objective~\eqref{eq:augBigan}. This results in improved semantic similarity between $x$ and $g(h(x))$.

%The third term in Objective~\eqref{eq:augBigan} influences the discriminator parameters which in turn result in better matching of $x$ and $g(h(x))$ through optimizing %Due to the additional third term in the Objective~\ref{eq:augBigan}, the discriminator will try to learn features that can differentiate between pairs $(h(x),x)$ and $(h(x),g(h(x)))$ 
\end{itemize}

We empirically evaluate these approaches with regard to similarity between $x$ and $g(h(x))$ both quantitatively and qualitatively, observing that Augmented-BiGAN works significantly better than BiGAN. We note that ALI~\cite{dumoulin2016adversarially} also has the problems of semantic mismatch and \emph{class switching} for reconstructed samples as reported by the authors, and a stochastic version of the proposed third term in the objective~\eqref{eq:augBigan} can potentially help there as well, investigation of which is left for future work.

\subsubsection{Estimating the dominant tangent space}
Once we have a trained encoder $h$ such that $g(h(x))$ is a good approximation to $x$ and $h(g(z))$ is a good approximation to $z$, we can use either $J_{h(x)} g$ or $J_xh$ to get an estimate of the tangent space. Specifically, the columns of $J_{h(x)} g$ and the rows of $J_x h$ are the directions that approximately span the tangent space to the data manifold at $x$. Almost all deep learning packages %\cite{tensorflow,theano,torch} 
implement reverse mode differentiation (to do backpropagation) which is computationally cheaper than forward mode differentiation for computing the Jacobian when the output dimension of the function is low (and vice versa when the output dimension is high). Hence we use $J_{x}h$ in all our experiments to get the tangents.

As there are approximation errors at several places ($\mcal{M}\sim $ data-manifold, $g(h(x))\sim x$, $h(g(z))\sim z$), it is preferable to only consider dominant tangent directions in the row span of $J_xh$. These can be obtained using the SVD on the matrix $J_x h$ and taking the right singular vectors corresponding to top singular values, as done in \cite{rifai2011manifold} where $h$ is trained using a contractive auto-encoder.  However, this process is expensive as the SVD needs to be done independently for every data sample. We adopt an alternative approach to get dominant tangent direction: we take the pre-trained model with encoder-generator-discriminator ($h$-$g$-$f$) triple and insert two extra functions $p:R^d\to R^{d_p}$ and $\bar{p}:R^{d_p}\to R^d$ (with $d_p < d$) which are learned by optimizing $\min_{p,\bar{p}} \mbb{E}_x [\lVert g(h(x))-g(\bar{p}(p(h(x))))\rVert_1 + \lVert f^{X}_{-1}(g(h(x))) - f^X_{-1}(g(\bar{p}(p(h(x)))))\rVert]$ while $g,h$ and $f$ are kept fixed from the pre-trained model. Note that our discriminator $f$ has two pipelines $f^Z$ and $f^X$ for the latent $z\in Z$ and the data $x\in X$, respectively, which share parameters in the last few layers (following \cite{donahue2016adversarial}), and we use the last layer of $f^X$ in this loss. This enables us to learn a nonlinear (low-dimensional) approximation in the $Z$ space such that $g(\bar{p}(p(h(x))))$ is close to $g(h(x))$. We use the Jacobian of $p\circ h$, $J_x\, p\circ h$, as an estimate of the $d_p$ dominant tangent directions ($d_p=10$ in all our experiments)\footnote{Training the GAN with $z\in Z \subset \mbb{R}^{d_p}$ results in a bad approximation of the data manifold. Hence we first learn the GAN with $Z\subset\mbb{R}^d$ and then approximate the smooth manifold $\mcal{M}$ parameterized by the generator using $p$ and $\bar{p}$ to get the dominant $d_p$ tangent directions to $\mcal{M}$.}. 

\subsection{Injecting invariances into the classifier using tangents}
\label{subsec:invar}
We use the \emph{tangent propagation} approach (TangentProp) \cite{simard1998transformation} to make the classifier invariant to the estimated tangent directions from the previous section. Apart form the regular classification loss on labeled examples, it uses a regularizer of the form $\sum_{i=1}^n \sum_{v\in T_{x_i} }\lVert (J_{x_i} c) \, v\rVert_2^2$, where $J_{x_i} c\in\mbb{R}^{k\times D}$ is the Jacobian of the classifier function $c$ at $x=x_i$ (with the number of classes $k$). and $T_x$ is the set of tangent directions we want the classifier to be invariant to. This term penalizes the \emph{linearized} variations of the classifier output along the tangent directions. \citet{simard1998transformation} get the tangent directions using slight rotations and translations of the images, whereas we use the GAN to estimate the tangents to the data manifold. 

We can go one step further and make the classifier invariant to small perturbations in \emph{all} directions emanating from a point $x$. This leads to the regularizer
\begin{align}
\sup_{v:\lVert v\rVert_p \leq \epsilon}\, \lVert (J_xc)\, v \rVert_j^j \leq \sum_{i=1}^k \sup_{v:\lVert v\rVert_p \leq \epsilon} \lvert (J_xc)_{i:}\, v \rvert^j = \epsilon^j \sum_{i=1}^k  \lVert (J_xc)_{i:}\rVert_q^j,  
\label{eq:jacobreg}
\end{align}
\noindent where $\lVert\cdot\rVert_q$ is the dual norm of $\lVert\cdot\rVert_p$ (i.e., $\frac{1}{p}+\frac{1}{q}=1$), and $\lVert\cdot\rVert_j^j$ denotes $j$th power of $\ell_j$-norm. This reduces to squared Frobenius norm of the Jacobian matrix $J_xc$ for $p=j=2$. The penalty in Eq.~\eqref{eq:jacobreg} is closely related to the recent work on \emph{virtual adversarial training} (VAT) \cite{miyato2015distributional} which uses a regularizer (ref. Eq (1), (2) in \cite{miyato2015distributional})
\begin{align}
\sup_{v:\lVert v\rVert_2 \leq \epsilon} \text{KL} [ c(x) || c(x+v)], 
\label{eq:vatreg}
\end{align}
\noindent where $c(x)$ are the classifier outputs (class probabilities). VAT\cite{miyato2015distributional} approximately estimates $v^*$ that yields the $\sup$ using the gradient of $\text{KL} [ c(x) || c(x+v)]$, calling $(x+v^*)$ as \emph{virtual adversarial example} (due to its resemblance to \emph{adversarial training} \cite{goodfellow2014explaining}), and uses $\text{KL} [ c(x) || c(x+v^*)]$ as the regularizer in the classifier objective. If we replace KL-divergence in Eq.~\ref{eq:vatreg} with total-variation distance and optimize its first-order approximation, it becomes equivalent to the regularizer in Eq.~\eqref{eq:jacobreg} for $j=1$ and $p=2$.  

In practice, it is computationally expensive to optimize these Jacobian based regularizers. Hence in all our experiments we use stochastic finite difference approximation for all Jacobian based regularizers. For TangentProp, we use $\lVert c(x_i+v) - c(x_i)\rVert_2^2$ with $v$ randomly sampled (i.i.d.) from the set of tangents $T_{x_i}$ every time example $x_i$ is visited by the SGD. For Jacobian-norm regularizer of Eq.~\eqref{eq:jacobreg}, we use $\lVert c(x+\delta) - c(x)\rVert_2^2$ with $\delta\sim N(0,\sigma^2 I)$ (i.i.d) every time an example $x$ is visited by the SGD, which approximates an upper bound on Eq.~\eqref{eq:jacobreg} in expectation (up to scaling) for $j=2$ and $p=2$. 

\subsection{GAN discriminator as the classifier for semi-supervised learning: effect of fake examples}

Recent works have used GANs for semi-supervised learning where the discriminator also serves as a classifier \cite{salimans2016improved,dumoulin2016adversarially,odena2016semi}. For a semi-supervised learning problem with $k$ classes, the discriminator has $k+1$ outputs with the $(k+1)$'th output corresponding to the \emph{fake} examples originating from the generator of the GAN. The loss for the discriminator $f$ is given as \cite{salimans2016improved}
\begin{align}
\begin{split}
& L^f = L^f_{\text{sup}} + L^f_{\text{unsup}}, \text{where } L^f_{\text{sup}} = -\mbb{E}_{(x,y)\sim p_d(x,y)} \log p_{f}(y|x, y\leq k) \\
& \text{and } L^f_{\text{unsup}} =  -\mbb{E}_{x\sim p_g(x)} \log (p_f(y=k+1|x)) - \mbb{E}_{x\sim p_d(x)} \log (1- p_f(y=k+1|x))).
\end{split}
\label{eq:semisupganloss}
\end{align}
The term $p_f(y=k+1|x)$ is the probability of $x$ being a fake example and $(1-p_f(y=k+1|x))$ is the probability of $x$ being a real example (as assigned by the model). The loss component $L^f_{\text{unsup}}$ is same as the regular GAN discriminator loss with the only modification that probabilities for real vs. fake are compiled from $(k+1)$ outputs. \citet{salimans2016improved} proposed training the generator using \emph{feature matching} where the generator minimizes the mean discrepancy between the features for real and fake examples obtained from an intermediate layer $\ell$ of the discriminator $f$, i.e., $L^g = \lVert \mbb{E}_x f_{\ell}(x) - \mbb{E}_z f_{\ell}(g(z))\rVert_2^2$. Using feature matching loss for the generator was empirically shown to result in much better accuracy for semi-supervised learning compared to other training methods including \emph{minibatch discrimination} and regular GAN generator loss \cite{salimans2016improved}. 

Here we attempt to develop an intuitive understanding of how fake examples influence the learning of the classifier and why feature matching loss may work much better for semi-supervised learning compared to regular GAN. We will use the term classifier and discriminator interchangeably based on the context however they are really the same network as mentioned earlier. Following \cite{salimans2016improved} we assume the $(k+1)$'th logit is fixed to $0$ as subtracting a term $v(x)$ from all logits does not change the \emph{softmax} probabilities. Rewriting the unlabeled loss of Eq.~\eqref{eq:semisupganloss} in terms of \emph{logits} $l_i(x),\,i=1,2,\ldots,k$, we have
\begin{align}
L^f_\text{unsup} = \mbb{E}_{x_g\sim p_g} \log \left(1+\sum_{i=1}^k e^{l_i(x_g)}\right) - \mbb{E}_{x\sim p_d} \left[\log \sum_{i=1}^k e^{l_i(x)} - \log \left(1+\sum_{i=1}^{k} e^{l_i(x)}\right) \right] 
\end{align}
Taking the derivative w.r.t. discriminator's parameters $\theta$ followed by some basic algebra, we get $\nabla_\theta L^f_\text{unsup} =$
\begin{align}
& \underset{x_g\sim p_g}{\mbb{E}} \sum_{i=1}^k p_f(y=i|x_g) \nabla l_i(x_g) - \underset{x\sim p_d}{\mbb{E}}  \left[ \sum_{i=1}^k p_f(y=i|x,y\leq k) \nabla l_i(x) -  \sum_{i=1}^k p_f(y=i|x) \nabla l_i(x) \right] \nonumber \\ 
&= \underset{x_g\sim p_g}{\mbb{E}} \sum_{i=1}^k \underbrace{p_f(y=i|x_g)}_{a_i(x_g)} \nabla l_i(x_g) - \underset{x\sim p_d}{\mbb{E}} \sum_{i=1}^k \underbrace{p_f(y=i|x,y\leq k) p_f(y=k+1|x)}_{b_i(x)}\nabla l_i(x) 
\label{eq:unsupgrad}
\end{align}
Minimizing $L^f_\text{unsup}$ will move the parameters $\theta$ so as to decrease $l_i(x_g)$ and increase $l_i(x)$ ($i=1,\ldots,k$). The rate of increase in $l_i(x)$ is also modulated by $p_f(y=k+1|x)$. This results in warping of the functions $l_i(x)$ around each real example $x$ with more warping around examples about which the current model $f$ is more confident that they belong to class $i$: $l_i(\cdot)$ becomes locally concave around those real examples $x$ if $x_g$ are loosely scattered around $x$. Let us consider the following three cases: 

{\bf Weak fake examples.~} When the fake examples coming from the generator are very \emph{weak} (i.e., very easy for the current discriminator to distinguish from real examples), we will have $p_f(y=k+1|x_g)\approx 1$, $p_f(y=i|x_g)\approx 0$ for $1\leq i\leq k$ and $p_f(y=k+1|x)\approx 0$. Hence there is no gradient flow from Eq.~\eqref{eq:unsupgrad}, rendering unlabeled data almost useless for semi-supervised learning. 

{\bf Strong fake examples.~} When the fake examples are very \emph{strong} (i.e., difficult for the current discriminator to distinguish from real ones), we have $p_f(k+1|x_g)\approx 0.5+\epsilon_1$, $p_f(y=i_\text{max}|x_g)\approx 0.5-\epsilon_2$ for some $i_\text{max}\in \{1,\ldots,k\}$ and $p_f(y=k+1|x)\approx 0.5-\epsilon_3$ (with $\epsilon_2 > \epsilon_1\geq 0$ and $\epsilon_3\geq 0$). Note that $b_i(x)$ in this case would be smaller than $a_i(x)$ since it is a product of two probabilities. If two examples $x$ and $x_g$ are close to each other with $i_\text{max} = \argmax_i l_i(x) = \argmax_i l_i(x_g)$ (e.g., $x$ is a \emph{cat} image and $x_g$ is a highly realistic generated image of a \emph{cat}), the optimization will push $l_{i_\text{max}}(x)$ up by some amount and will pull $l_{i_\text{max}}(x_g)$ down by a larger amount. We further want to consider two cases here: {\bf (i) Classifier with enough capacity:} If the classifier has enough capacity, this will make the curvature of $l_{i_\text{max}}(\cdot)$ around $x$ really high (with $l_{i_\text{max}}(\cdot)$ locally concave around $x$) since $x$ and $x_g$ are very close. This results in over-fitting around the unlabeled examples and for a test example $x_t$ closer to $x_g$ (which is quite likely to happen since $x_g$ itself was very realistic sample), the model will more likely misclassify $x_t$. 
{\bf (ii) Controlled-capacity classifier: } Suppose the capacity of the classifier is controlled with adequate regularization. In that case the curvature of the function $l_{i_\text{max}}(\cdot)$ around $x$ cannot increase beyond a point. However, this results in $l_{i_\text{max}}(x)$ being pulled down by the optimization process since $a_i(x_g) > b_i(x)$. This is more pronounced for examples $x$ on which the classifier is not so confident (i.e., $p_f(y=i_\text{max}|x,y\leq k)$ is low, although still assigning highest probability to class $i_\text{max}$) since the gap between $a_i(x_g)$ and $b_i(x)$ becomes higher. For these examples, the entropy of the distribution $\{p(y=i|x,y\leq k)\}_{i=1}^k$ may actually \emph{increase} as the training proceeds which can hurt the test performance. 

{\bf Moderate fake examples.~} When the fake examples from the generator are neither too weak nor too strong for the current discriminator (i.e., $x_g$ is a somewhat distorted version of $x$), the unsupervised gradient will push $l_{i_\text{max}}(x)$ up while pulling $l_{i_\text{max}}(x_g)$ down, giving rise to a moderate curvature of $l_i(\cdot)$ around real examples $x$ since $x_g$ and $x$ are sufficiently far apart (consider multiple distorted \emph{cat} images scattered around a real \emph{cat} image at moderate distances). This results in a smooth decision function around real unlabeled examples. Again, the curvatures of $l_i(\cdot)$ around $x$ for classes $i$ which the current classifier does not trust for the example $x$ are not affected much. 
Further, $p_f(y=k+1|x)$ will be less than the case when fake examples are very strong. Similarly $p_f(y=i_\text{max}|x_g)$ (where $i_\text{max}= \argmax_{1\leq i\leq k} l_i(x_g)$) will be less than the case of strong fake examples. Hence the norm of the gradient in Eq.~\eqref{eq:unsupgrad} is lower and the contribution of unlabeled data in the overall gradient of $L^f$ (Eq.~\eqref{eq:semisupganloss} is lower than the case of strong fake examples. This intuitively seems beneficial as the classifier gets ample opportunity to learn on supervised loss and get confident on the right class for unlabeled examples, and then boost this confidence slowly using the gradient of Eq.~\eqref{eq:unsupgrad} as the training proceeds.

We experimented with regular GAN loss (i.e., $L^g = \mbb{E}_{x\sim p_g} \log (p_f(y=k+1|x)))$, and feature matching loss for the generator \cite{salimans2016improved}, plotting  several of the quantities of interest discussed above for MNIST (with $100$ labeled examples) and SVHN (with $1000$ labeled examples) datasets in Fig.\ref{fig:analysis}. Generator trained with feature matching loss corresponds to the case of \emph{moderate fake examples} discussed above (as it generates blurry and distorted samples as mentioned in \cite{salimans2016improved}). Generator trained with regular GAN loss corresponds to the case of \emph{strong fake examples} discussed above. We plot $\mbb{E}_{x_g} a_{i_\text{max}}(x_g)$ for $i_\text{max}=\argmax_{1\leq i\leq k} l_i(x_g)$ and $\mbb{E}_{x_g} [\frac{1}{k-1}\sum_{1\leq i\neq i_\text{max}\leq k} a_{i}(x_g)]$ separately to look into the behavior of $i_\text{max}$ logit. Similarly we plot $\mbb{E}_{x} b_{t}(x)$ separately where $t$ is the true label for unlabeled example $x$ (we assume knowledge of the true label only for plotting these quantities and not while training the semi-supervised GAN). Other quantities in the plots are self-explanatory. As expected, the unlabeled loss $L^f_\text{unsup}$ for regular GAN becomes quite high early on implying that fake examples are strong. The gap between $a_{i_\text{max}}(x_g)$ and $b_t(x)$ is also higher for regular GAN pointing towards the case of \emph{strong fake examples with controlled-capacity classifier} as discussed above. Indeed, we see that the average of the entropies for the distributions $p_f(y|x)$ (i.e., $\mbb{E}_x H(p_f(y|x,y\leq k))$) is much lower for feature-matching GAN compared to regular GAN (seven times lower for SVHN, ten times lower for MNIST). Test errors for MNIST for regular GAN and FM-GAN were $2.49\%$  ($500$ epochs) and $0.86\%$ ($300$ epochs), respectively. Test errors for SVHN were $13.36\%$ (regular-GAN at $738$ epochs) and $5.89\%$ (FM-GAN at $883$ epochs), respectively\footnote{We also experimented with minibatch-discrimination (MD) GAN\cite{salimans2016improved} but the minibatch features are not suited for classification as the prediction for an example $x$ is adversely affected by features of all other examples (note that this is different from batch-normalization). Indeed we notice that the training error for MD-GAN is $10$x that of regular GAN and FM-GAN. MD-GAN gave similar test error as regular-GAN.}.  It should also be emphasized that the semi-supervised learning heavily depends on the generator dynamically adapting fake examples to the current discriminator -- we observed that freezing the training of the generator at any point results in the discriminator being able to classify them easily (i.e., $p_f(y=k+1|x_g)\approx 1$) thus stopping the contribution of unlabeled examples in the learning.

\begin{figure*}[t!]
    \centering
    \begin{subfigure}[t]{0.23\textwidth}
        \centering
        \includegraphics[scale=0.32]{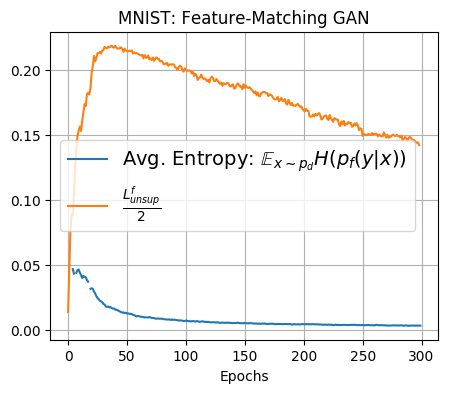}
    \end{subfigure}%
    ~ 
    \begin{subfigure}[t]{0.23\textwidth}
        \centering
        \includegraphics[scale=0.32]{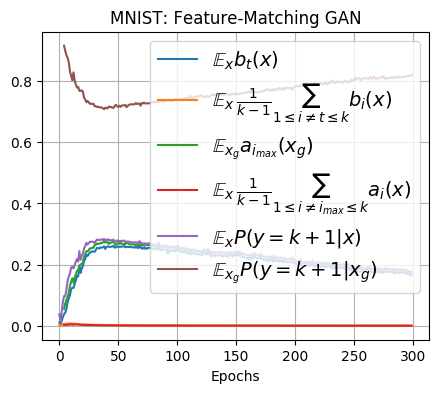}
    \end{subfigure}%
    ~
    \begin{subfigure}[t]{0.23\textwidth}
        \centering
        \includegraphics[scale=0.32]{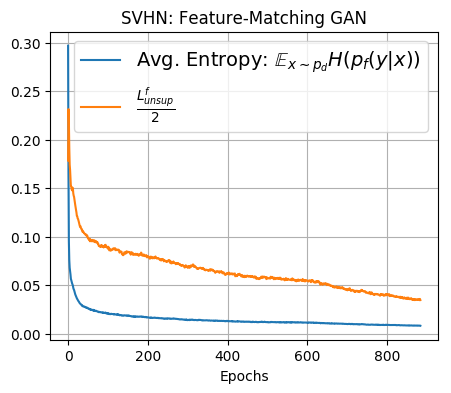}
    \end{subfigure}%
    ~ 
    \begin{subfigure}[t]{0.23\textwidth}
        \centering
        \includegraphics[scale=0.32]{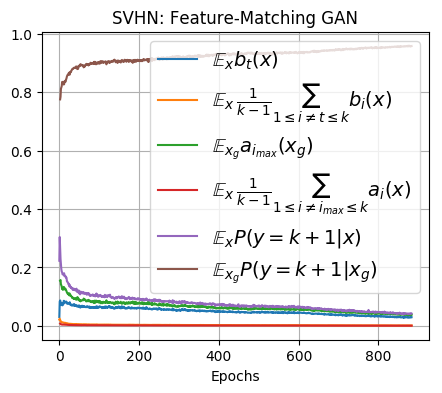}
    \end{subfigure}
    
    \begin{subfigure}[t]{0.23\textwidth}
        \centering
        \includegraphics[scale=0.32]{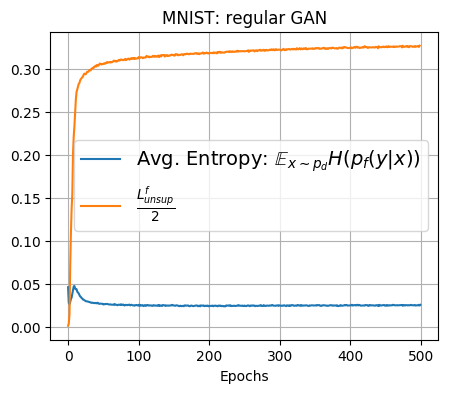}
    \end{subfigure}%
    ~ 
    \begin{subfigure}[t]{0.23\textwidth}
        \centering
        \includegraphics[scale=0.32]{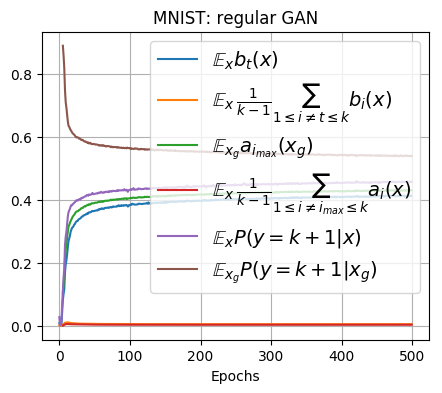}
    \end{subfigure}%
    ~
    \begin{subfigure}[t]{0.23\textwidth}
        \centering
        \includegraphics[scale=0.32]{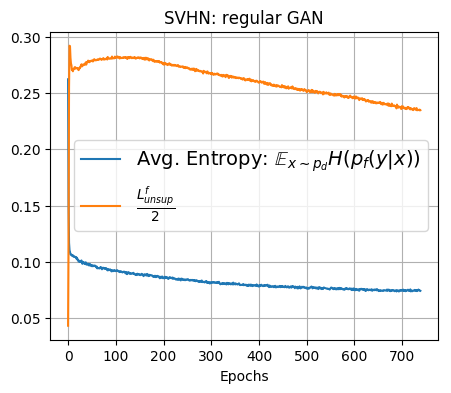}
    \end{subfigure}%
    ~ 
    \begin{subfigure}[t]{0.23\textwidth}
        \centering
        \includegraphics[scale=0.32]{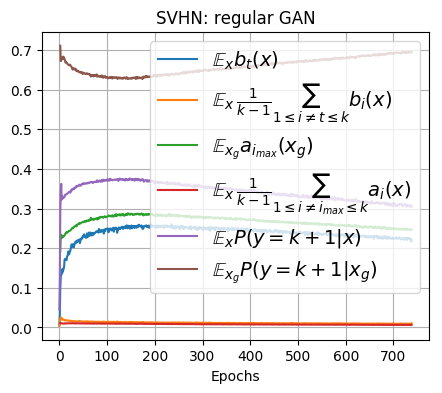}
    \end{subfigure}
    
    \caption{Plots of Entropy, $L^f_\text{unsup}$ (Eq.~\eqref{eq:semisupganloss}), $a_i(x_g)$, $b_i(x)$ and other probabilities (Eq.~\eqref{eq:unsupgrad}) for regular GAN generator loss and feature-matching GAN generator loss.}
    \label{fig:analysis}
\end{figure*}

{\bf Our final loss for semi-supervised learning.~} We use feature matching GAN with semi-supervised loss of Eq.~\eqref{eq:semisupganloss} as our classifier objective and incorporate  invariances from Sec.~\ref{subsec:invar} in it. Our final objective for the GAN discriminator is
\begin{align}
L^f = L^f_\text{sup} + L^f_\text{unsup} + \lambda_1 \mbb{E}_{x\sim p_d(x)} \sum_{v\in T_{x}}\lVert (J_{x} f) \, v\rVert_2^2 + \lambda_2 \mbb{E}_{x\sim p_d(x)} \lVert J_x f\rVert_F^2.
\label{eq:finalloss}
\end{align}
The third term in the objective makes the classifier decision function change slowly along tangent directions around a real example $x$. 
As mentioned in Sec.~\ref{subsec:invar} we use stochastic finite difference approximation for both Jacobian terms due to computational reasons.

%Problem with encoder/generator in bigan - reconstructed samples may change class
%two modifications to bigan/ALI training (subjective and quantitative results)

%Tangent regularizer from TangentProp, its pproximation (Accuracy numbers)
%Adversarial examples - sup over the tangents - Jacobian regularizer (Accuracy numbers)
%connection to VAT paper

%Semi-sup learning with GAN: the framework of 'fake' label; cite earlier work that uses this (improvedGAN, others)
%Insights into workings of improved GAN (Plots for feature matching, minibatch disc)

% !TeX spellcheck = en_US
\vspace{-2mm}
\section{Experiments}
\vspace{-2mm}
\begin{figure*}[t!]
    \centering
    \begin{subfigure}[t]{0.5\textwidth}
        \centering
        \includegraphics[height=1.5in]{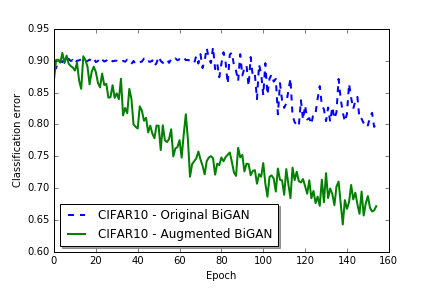}
    \end{subfigure}%
    ~ 
    \begin{subfigure}[t]{0.5\textwidth}
        \centering
        \includegraphics[height=1.5in]{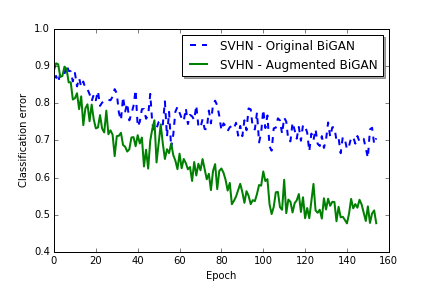}
    \end{subfigure}
    
    \begin{subfigure}[t]{0.5\textwidth}
        \centering
        \includegraphics[height=0.8in]{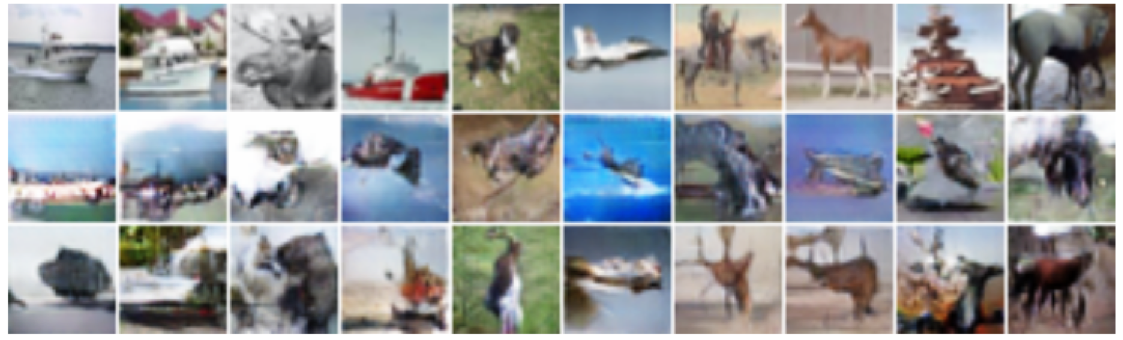}
    \end{subfigure}%
    ~ 
    \begin{subfigure}[t]{0.5\textwidth}
        \centering
        \includegraphics[height=0.8in]{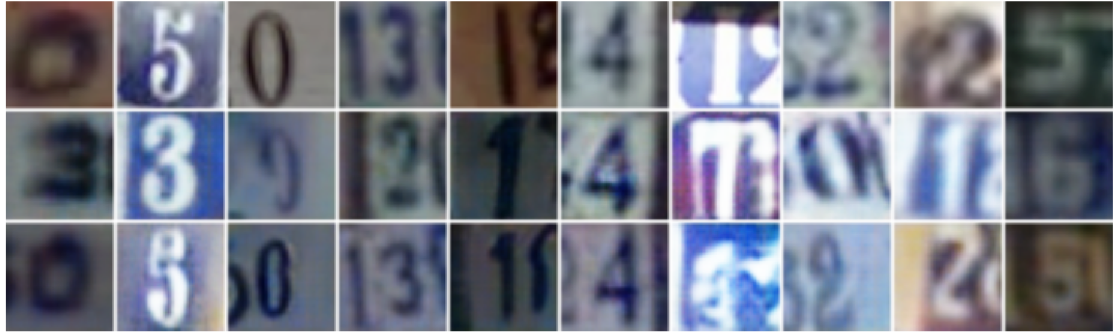}
    \end{subfigure}
    \caption{Comparing BiGAN with Augmented BiGAN based on the classification error on the reconstructed test images. \emph{Left column:} CIFAR10, \emph{Right column:} SVHN. In the images, the top row corresponds to the original images followed by BiGAN reconstructions in the middle row and the Augmented BiGAN reconstructions in the bottom row. More images can be found in the appendix.}
    \label{recons}
\end{figure*}

{\bf Implementation Details.~} The architecture of the endoder, generator and discriminator closely follow the network structures in ALI \cite{dumoulin2016adversarially}. We remove the stochastic layer from the ALI encoder (i.e., $h(x)$ is deterministic). For estimating the dominant tangents,  we employ fully connected two-layer network with $tanh$ non-linearly in the hidden layer to represent $p\circ \bar{p}$. The output of $p$ is taken from the hidden layer. Batch normalization was replaced by weight normalization in all the modules to make the output $h(x)$ (similarly $g(z)$) dependent only on the given input $x$ (similarly $z$) and not on the whole minibatch. This is necessary to make the Jacobians $J_x h$ and $J_z g$ independent of other examples in the minibatch. We replaced all ReLU nonlinearities in the encoder and the generator with the Exponential Linear Units (ELU) \cite{clevert2015fast} to ensure smoothness of the functions $g$ and $h$. We follow \cite{salimans2016improved} completely for optimization (using ADAM optimizer \cite{kingma2014adam} with the same learning rates as in \cite{salimans2016improved}). %Following the feature matching GAN implementation\cite{salimans2016improved}, we maintain a running average of the discriminator updates and use them to perform weight updates. Also, 
Generators (and encoders, if applicable) in all the models are trained using feature matching loss.

{\bf Semantic Similarity.~} The image samples ${x}$ and their reconstructions ${g(h(x))}$ for BiGAN and Augemented-BiGAN can be seen in Fig. \ref{recons}. To quantitatively measure the semantic similarity of the reconstructions to the original images, we learn a supervised classifier using the full training set and obtain the classification accuracy on the reconstructions of the test images. The architectures of the classifier for CIFAR10 and SVHN are similar to their corresponding GAN discriminator architectures we have. The lower error rates with our Augmented-BiGAN suggest that it leads to reconstructions with reduced \emph{class-switching}. 

\begin{figure}[t]
    \centering
    \begin{subfigure}[t]{\textwidth}
        \centering
        \includegraphics[height=0.8in]{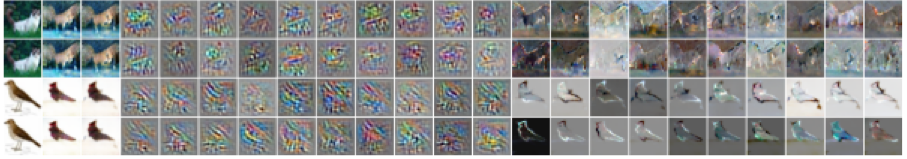}
    \end{subfigure}
    
    \begin{subfigure}[t]{\textwidth}
        \centering
        \includegraphics[height=0.8in]{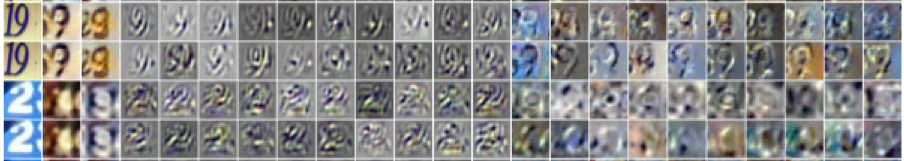}
    \end{subfigure}
    \caption{Visualizing tangents. \emph{Top:} CIFAR10, \emph{Bottom: SVHN}. \emph{Odd rows:} Tangents using our method for estimating the dominant tangent space. \emph{Even rows:} Tangents using SVD on $J_{h(x)} g$ and $J_x h$. \emph{First column:} Original image. \emph{Second column:} Reconstructed image using $g \circ h$. \emph{Third column:} Reconstructed image using $g\circ \bar{p} \circ p \circ h$. \emph{Columns 4-13:} Tangents using encoder. \emph{Columns 14-23:} Tangents using generator.}
    \label{tans}
    \vspace{-4mm}
\end{figure}

\begin{table}[t]
\centering
\begin{tabular}{c c c c  c  }
\toprule
\multirow{ 2}{*}{Model} & \multicolumn{2}{c}{~~~SVHN~~~}  & \multicolumn{2}{c}{CIFAR-10} \\
& {$N_l = 500$} & {$N_l=1000$} & {$N_l=1000$} & {$N_l=4000$} \\
\midrule
VAE (M1+M2) \cite{kingma2014semi} & --  & $36.02\pm 0.10$ & -- & --  \\
SWWAE with dropout \cite{zhao2015stacked} & --  & $23.56$ & -- & --  \\
VAT \cite{miyato2015distributional} & --  & 24.63 & -- & --  \\
Skip DGM \cite{maaloe2016auxiliary} & --  & $16.61\pm 0.24$ & -- & --  \\
Ladder network \cite{rasmus2015semi} & -- & -- & -- & $20.40$ \\
ALI \cite{dumoulin2016adversarially} &  -- & $7.41 \pm 0.65$ & $19.98\pm 0.89$ & $17.99\pm 1.62$  \\
FM-GAN \cite{salimans2016improved} & $18.44\pm 4.8$  & $8.11\pm 1.3$ & $21.83\pm 2.01$ & $18.63 \pm 2.32$  \\
Temporal ensembling \cite{laine2016temporal} & $5.12\pm 0.13$ & $4.42\pm 0.16$ & -- & ${\bf 12.16}\pm 0.24$ \\
FM-GAN + Jacob.-reg (Eq.~\eqref{eq:jacobreg}) & $10.28\pm 1.8$  & $4.74\pm 1.2$ & $20.87\pm 1.7$ &  $16.84\pm 1.5$ \\
FM-GAN + Tangents  & $5.88\pm 1.5$  & $5.26\pm 1.1$  & $20.23\pm 1.3$ & $16.96 \pm 1.4$  \\
FM-GAN + Jacob.-reg + Tangents  & ${\bf 4.87} \pm 1.6$   & ${\bf 4.39} \pm 1.2$ & ${\bf 19.52}\pm 1.5$ & ${16.20}\pm 1.6$  \\

\bottomrule
\end{tabular}
\vspace{2mm}
\caption{Test error with semi-supervised learning on SVHN and CIFAR-10 ($N_l$ is the number of labeled examples). All results for the proposed methods (last 3 rows) are obtained with training the model for $600$ epochs for SVHN and $900$ epochs for CIFAR10, and are averaged over $5$ runs. % with error-bars $=3 \sigma$. %ALI results for CIFAR10 are taken from \cite{dumoulin2016adversarially} (trained for $6475$ epochs).
}
\label{tab:semisup}
\vspace{-2mm}
\end{table}

\begin{table}[t]
	\centering
	\begin{tabular}{c c c c  c c c c c c c c }
		\toprule
		& $d(\mcal{S}_1,\mcal{S}_2)$ & $\theta_1$ & $\theta_2$ & $\theta_3$ & $\theta_4$ & $\theta_5$ & $\theta_6$ & $\theta_7$ & $\theta_8$ & $\theta_9$ & $\theta_{10}$ \\
		\midrule
	Rand-Rand & 4.5 & 14 &  83 &  85 &  86 &  87 &  87 &  88 &  88 &  88 &  89 \\
	SVD-Approx. (CIFAR)& 2.6 & 2& 15 & 21& 26& 34& 40& 50& 61& 73& 85 \\
	SVD-Approx. (SVHN)& 2.3 & 1& 7&  12& 16& 22& 30& 41& 51& 67& 82 \\
	
		\bottomrule
	\end{tabular}
	\vspace{2mm}
	\caption{Dominant tangent subspace approximation quality: Columns show the geodesic distance and $10$ principal angles between the two subspaces. \emph{Top row} shows results for two randomly sampled $10$-dimensional subspaces in $3072$-dimensional space,  \emph{middle} and \emph{bottom} rows show results for dominant subspace obtained using SVD of $J_{x}h$ and dominant subspace obtained using our method,  for CIFAR-10 and SVHN, respectively. All numbers are averages $10$ randomly sampled test examples.  
	}
	\label{tab:tanapprox}
	\vspace{-4mm}
\end{table}

{\bf Tangent approximations.~} Tangents for CIFAR10 and SVHN are shown in Fig. \ref{tans}. We show visual comparison of tangents from $J_x (p\circ h)$, from $J_{p(h(x))} g\circ \bar{p}$, and from $J_x h$ and $J_{h(x)} g$  followed by the SVD to get the dominant tangents. It can be seen that the proposed method for getting dominant tangent directions gives similar tangents as SVD. The tangents from the generator (columns 14-23) look  different (more colorful) from the tangents from the encoder (columns 4-13) though they do trace the boundaries of the objects in the image (just like the tangents from the encoder). %This can be due to the fact that $g$ and $h$ are not exact inverse of each other and the manifold $\mcal{M}$ parameterized by $g$ does not match the data manifold well. 
We also empirically quantify our method for dominant tangent subspace estimation against the SVD estimation by computing the geodesic distances and principal angles between these two estimations. These results are shown in Table \ref{tab:tanapprox}.

{\bf Semi-supervised learning results.}
Table \ref{tab:semisup} shows the results for SVHN and CIFAR10 with various number of labeled examples. For all experiments with the tangent regularizer for both CIFAR10 and SVHN, we use $10$ tangents. The hyperparameters $\lambda_1$ and $\lambda_2$ in Eq.~\eqref{eq:finalloss} are set to $1$. We obtain significant improvements over baselines, particularly for SVHN and more so for the case of $500$ labeled examples. We do not get as good results on CIFAR10 which may be due to the fact that our encoder for CIFAR10 is still not able to approximate the inverse of the generator well (which is evident from the sub-optimal reconstructions we get for CIFAR10) and hence the tangents we get are not good enough. We think that obtaining better estimates of tangents for CIFAR10 has the potential for further improving the results. ALI \cite{dumoulin2016adversarially} accuracy for CIFAR ($N_l=1000$) is also close to  our results however ALI results were obtained by running the optimization for $6475$ epochs with a slower learning rate as mentioned in \cite{dumoulin2016adversarially}. Temporal ensembling \cite{laine2016temporal} using explicit data augmentation assuming knowledge of the class-preserving transformations on the input, while our method estimates these transformations from the data manifold in the form of tangent vectors. It outperforms our method by a significant margin on CIFAR-10 which could be due the fact that it uses horizontal flipping based augmentation for CIFAR-10 which cannot be learned through the tangents as it is a non-smooth transformation. The use of temporal ensembling in conjunction with our method has the potential of further improving the semi-supervised learning results.

\vspace{-2mm}
\section{Discussion}
\vspace{-2mm}
Our empirical results show that using the tangents of the data manifold (as estimated by the generator of the GAN) to inject invariances in the classifier improves the performance on semi-supevised learning tasks. In particular we observe impressive accuracy gains on SVHN (more so for the case of $500$ labeled examples) for which the tangents obtained are good quality. We also observe improvements on CIFAR10 though not as impressive as SVHN. We think that improving on the quality of tangents for CIFAR10 has the potential for further improving the results there, which is a direction for future explorations. We also shed light on the effect of fake examples in the common framework used for semi-supervised learning with GANs where the discriminator predicts \emph{real} class labels along with the \emph{fake} label. Explicitly controlling the difficulty level of fake examples (i.e., $p_f(y=k+1|x_g)$ and hence indirectly $p_f(y=k+1|x)$ in Eq.~\eqref{eq:unsupgrad}) to do more effective semi-supervised learning is another direction for future work. One possible way to do this is to have a distortion model for the real examples (i.e., replace the \emph{generator} with a \emph{distorter} that takes as input the real examples) whose strength is controlled for more effective semi-supervised learning.

\clearpage
\bibliography{ml}
\bibliographystyle{plainnat}
\newpage
\appendix
\begin{center}{\Large \bf Appendix }\end{center}

\punt{
\section{Algorithm outline}

\begin{algorithm}[H]
		\KwData{Labeled examples $\{x_i,y_i\}_{i=1}^{N_l}$, unlabeled examples $\{x_i\}_{i=1}^{N_u}$}
	\begin{algorithmic}[1]

	\STATE \textbf{Train the Augmented-BiGAN} by optimizing the objective in Eq. \eqref{eq:augBigan} to learn the generator $g$, encoder $h$ and discriminator $f$. %      This gives an encoder $h(.)$ that maps data sample $x \in X$ to latent representation   $z \in Z$ as $z = h(x)$, a generator $g(.)$ that maps the latent representation back to    the data manifold as $x = g(z)$ and a discriminator $f(.)$ that maps the generated     image to $k+1$ classes.
	 \STATE \textbf{Estimate the dominant tangent directions} by learning a nonlinear (low-dimensional)      approximation in the $Z$-space such that $g(\bar{p}(p(h(x))))$ is close to $g(h(x))$, where     $p:R^d\to R^{d_p}$ and $\bar{p}:R^{d_p}\to R^d$ (with $d_p < d$). These are learned by optimizing      $\min_{p,\bar{p}} \mbb{E}_x [\lVert g(h(x))-g(\bar{p}(p(h(x))))\rVert_1 + \lVert f^{X}_{-1}(g(h(x))) - f^X_{-1}(g(\bar{p}(p(h(x)))))\rVert]$      while $g,h$ and $f$ are kept fixed from the pre-trained model. 
	%    Discriminator $f$ has two pipelines $f^Z$ and $f^X$ for the latent $z$ and the data $x$, \\      respectively, which share parameters in the last few layers, and the last layer of $f^X$ is used in this loss.  \\
	    Use the Jacobian of $p\circ h$, $J_x\, p\circ h$, as an estimate of the $d_p$ dominant tangent directions. 
	\STATE   - \textbf{Train Classifier using TangentProp}: Apart form the regular classification loss,      add the following terms based on stoachistoc approximation of Jacobians: 
	     $\lVert c(x_i+v) - c(x_i)\rVert_2^2$ with $v$ randomly sampled (i.i.d.) from the set of tangents $T_{x_i}$,
	    and $\lVert c(x+\delta) - c(x)\rVert_2^2$ with $\delta\sim N(0,\sigma^2 I)$ (i.i.d), to achieve invariance in tangent      directions and small random perturbations, respectively.
		\end{algorithmic}
	\caption{Outline of the proposed Semi-supervised learning framework.}
\end{algorithm}

\begin{table}[h]
	\centering
	\caption{Outline of the proposed Semi-supervised learning framework.}
	\begin{tabular}{|l|}
		\hline
		\textbf{Input} \\
		- Data matrix, $X \in \mbb{R}^{N \times D}$  \\
		- Class labels, $Y \in \mbb{R}^{N \times k}$ \\ \\
		\textbf{Algorithm}\\
		\quad - \textbf{Train the Augmented-BiGAN}: that optimizes the objective in equation \ref{eq:augBigan}. \\ \quad \quad  This gives an encoder $h(.)$ that maps data sample $x \in X$ to latent representation \\ \quad \quad $z \in Z$ as $z = h(x)$, a generator $g(.)$ that maps the latent representation back to \\ \quad \quad the data manifold as $x = g(z)$ and a discriminator $f(.)$ that maps the generated \\ \quad \quad image to $k+1$ classes.\\
		\quad - \textbf{Obtain the dominant tangent directions}: by learning a nonlinear (low-dimensional) \\ \quad \quad approximation in the $Z$ space such that $g(\bar{p}(p(h(x))))$ is close to $g(h(x))$, where \\ \quad \quad $p:R^d\to R^{d_p}$ and $\bar{p}:R^{d_p}\to R^d$ (with $d_p < d$). These are learned by optimizing \\ \quad \quad  $\min_{p,\bar{p}} \mbb{E}_x [\lVert g(h(x))-g(\bar{p}(p(h(x))))\rVert_1 + \lVert f^{X}_{-1}(g(h(x))) - f^X_{-1}(g(\bar{p}(p(h(x)))))\rVert]$ \\ \quad \quad while $g,h$ and $f$ are kept fixed from the pre-trained model. \\ 
		%\quad \quad Discriminator $f$ has two pipelines $f^Z$ and $f^X$ for the latent $z$ and the data $x$, \\ \quad \quad  respectively, which share parameters in the last few layers, and the last layer of $f^X$ is used in this loss.  \\
		\quad \quad Use the Jacobian of $p\circ h$, $J_x\, p\circ h$, as an estimate of the $d_p$ dominant tangent directions. \\
		\quad - \textbf{Train Classifier using TangentProp}: Apart form the regular classification loss, \\ \quad \quad add the following terms based on stoachistoc approximation of Jacobians: \\
		\quad \quad  $\lVert c(x_i+v) - c(x_i)\rVert_2^2$ with $v$ randomly sampled (i.i.d.) from the set of tangents $T_{x_i}$,\\ 
		\quad \quad and $\lVert c(x+\delta) - c(x)\rVert_2^2$ with $\delta\sim N(0,\sigma^2 I)$ (i.i.d), to achieve invariance in tangent \\ \quad \quad directions and small random perturbations, respectively.\\
		\hline
	\end{tabular}
	\label{Table:algorithm}
\end{table}
}

%\newpage
\section{Tangent Plots}

\begin{figure}[!h]
	\centering
    \includegraphics[width=\textwidth]{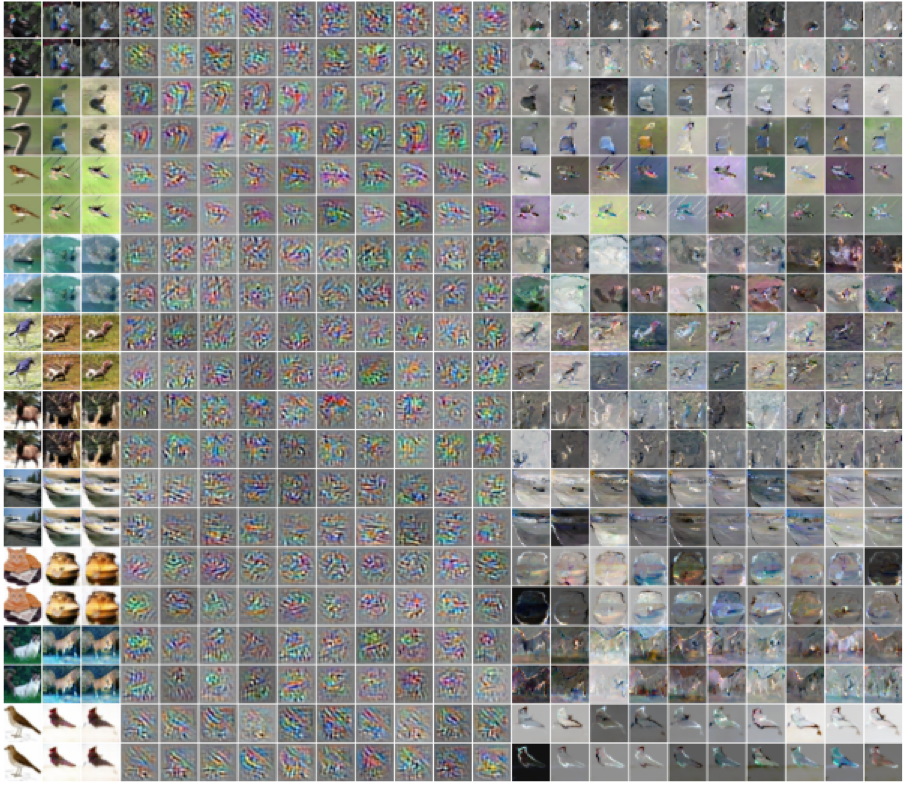}
    \caption{CIFAR10 tangents. \emph{Odd rows:} Tangents using our method for estimating the dominant tangent space. \emph{Even rows:} Tangents using SVD on $J_{h(x)} g$ and $J_x h$. \emph{First column:} Original image. \emph{Second column:} Reconstructed image using $g \circ h$. \emph{Third column:} Reconstructed image using $g\circ \bar{p} \circ p \circ h$. \emph{Columns 4-13:} Tangents using encoder. \emph{Columns 14-23:} Tangents using generator.}
\end{figure}
    
\begin{figure}[!h]
	\centering
    \includegraphics[width=\textwidth]{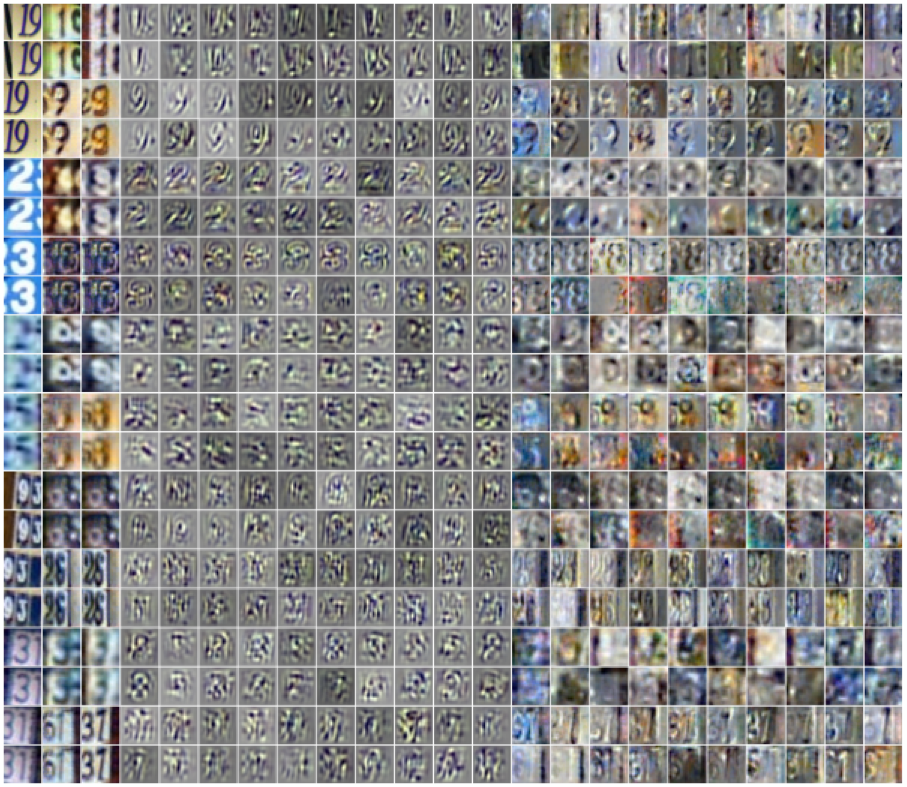}
    \caption{SVHN tangents. \emph{Odd rows:} Tangents using our method for estimating the dominant tangent space. \emph{Even rows:} Tangents using SVD on $J_{h(x)} g$ and $J_x h$. \emph{First column:} Original image. \emph{Second column:} Reconstructed image using $g \circ h$. \emph{Third column:} Reconstructed image using $g\circ \bar{p} \circ p \circ h$. \emph{Columns 4-13:} Tangents using encoder. \emph{Columns 14-23:} Tangents using generator.}

\end{figure}

\newpage

\section{Reconstruction Plots}

\begin{figure}[h]
	\centering
    \includegraphics[width=\textwidth]{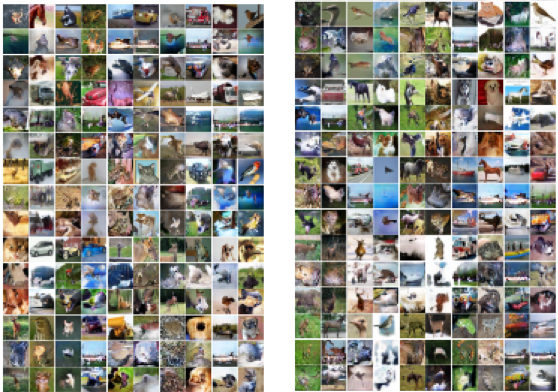}
    \caption{CIFAR10 reconstructions: Comparing BiGAN reconstructions with Augmented-BiGAN reconstructions. For $i=0,1,\ldots,4$, $(3i+1)$th row shows the original images, followed by BiGAN reconstructions in the $(3i+2)$'th row, and the Augmented-BiGAN reconstructions in the $(3i+3)$'th row. Reconstructions shown for total $100$ randomly sampled images from the test set ($10\times 5=50$ images each in the left and right column) from the test set.}
\end{figure}
    
\begin{figure}[h]
	\centering
    \includegraphics[width=\textwidth]{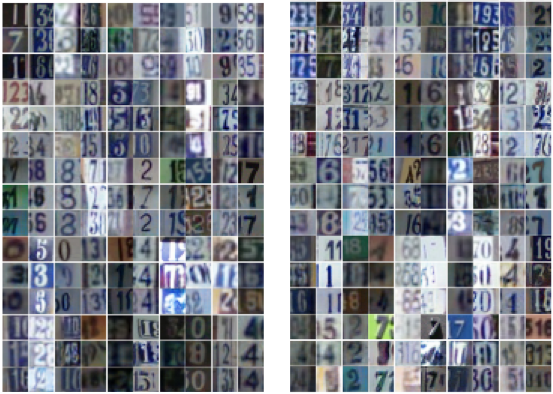}
    \caption{SVHN reconstructions: Comparing BiGAN reconstructions with Augmented-BiGAN reconstructions. For $i=0,1,\ldots,4$, $(3i+1)$th row shows the original images, followed by BiGAN reconstructions in the $(3i+2)$'th row, and the Augmented-BiGAN reconstructions in the $(3i+3)$'th row. Reconstructions shown for total $100$ randomly sampled images from the test set ($10\times 5=50$ images each in the left and right column).}
\end{figure}

\end{document}